\providecommand\m[1]{\ensuremath{#1}\xspace}
\renewcommand{\m}[1]{\ensuremath{#1}\xspace}
\newcommand{\trval}[1]{\m{\mathbf{#1}}}
	\newcommand{\limplies}{\Rightarrow}
	\newcommand{\lequiv}{\Leftrightarrow}
	\newcommand{\lrule}{\leftarrow}
	\newcommand{\cause}{\stackrel{c}{\lrule}}
	\newcommand{\ltrue}{\trval{t}}
	\newcommand{\lfalse}{\trval{f}}
	\newcommand{\lunkn}{\trval{u}}
	\newcommand{\true}{\m{\top}}
	\newcommand{\false}{\m{\bot}}
	\newcommand{\voc}{\m{\Sigma}}
	\newcommand{\struct}{\m{I}}
	\newcommand{\I}{\m{\mathcal{I}}}
	\newcommand{\theory}{\m{\mathcal{T}}}
	\newcommand{\D}{\m{\Delta}}
	\newcommand{\lit}{\m{l}}
	\NewDocumentCommand\inter{g+g}{%
	  \IfNoValueTF{#1}
	    {\struct}
	    {\m{#1^{#2}}}}
	\renewcommand{\int}{\m{\mathbb{Z}}}
	\newcommand{\leqp}{\m{\leq_p}}
	\newcommand{\leqt}{\m{\leq_t}}
	\NewDocumentCommand\subs{g+g}{%
	  \IfNoValueTF{#1}
	    {\m{/}}
	    {\m{#1/ #2}}}
	\newcommand{\logicname}[1]{\textsc{#1}\xspace}
	\newcommand{\foid}{\logicname{FO(ID)}}
\newcommand{\ouracronym}[3]{%
	\newacronym{#1}{#2}{#3}
	\expandafter\newcommand\csname #1\endcsname{\gls{#1}\xspace}%
}
	\def\ifenv#1{
	\def\@tempa{#1}%
	\def\@ttempa{#1*}%
	\ifx\@tempa\@currenvir
	\expandafter\@firstoftwo
	\else
	\expandafter\@secondoftwo
	\fi
	}
	\newcommand{\ddrule}[4]{\ensuremath{#1 \leftarrow #2 & \{#3\} & #4}}
	\newcommand{\drule}[2]{\ensuremath{#1 & \leftarrow & #2}}
	\newcommand{\darule}[4]{\ensuremath{#1 \leftarrow #2 & \{#3\} & #4}}
	\newcommand{\arule}[2]{\ensuremath{#1 \, &\leftarrow \, #2}}
	\newenvironment{ldef}{\left\{\begin{array}{l@{ \,}l@{\,}l}}{\end{array}\right\}}
	\newenvironment{ltheo}{\[\begin{array}{l}}{\end{array}\]\ignorespacesafterend}
	\newcommand{\LNDRule}[2]{
	\ifenv{array}
	{\drule{#1}{#2}}
	{ \ifenv{align}
		{\arule{#1}{#2}}
		{\ifenv{align*}
		{\arule{#1}{#2}}
		{ERROR: using LDRule in unsupported environment: \@currenvir}
		}
	}
	}
	\newcommand{\LDRule}[4]{
	\ifenv{array}
	{\ddrule{#1}{#2}{#3}{#4}}
	{ \ifenv{align}
		{\darule{#1}{#2}{#3}{#4}}
		{\ifenv{align*}
		{\darule{#1}{#2}{#3}{#4}}
		{ERROR: using LDRule in unsupported environment: \@currenvir}
		}
	}
	}
	\NewDocumentCommand\LRule{m+g+g+g}{%
		\IfNoValueTF{#2}%
		{#1.&}{%
		\IfNoValueTF{#3}
		{\LNDRule{#1}{#2.}}
		{\LDRule{#1}{#2.}{#3}{#4}}%
		}
	}
	\NewDocumentCommand\CLRule{m+g}{%
	\ifenv{array}
	{\cdrule{#1}{#2}}
	{ \ifenv{align}
		{\carule{#1}{#2}}
		{\ifenv{align*}
			{\carule{#1}{#2}}
			{ERROR: using CLRule in unsupported environment: \@currenvir}
		}
	}
	}
	\NewDocumentCommand\carule{m+g}{%
		\IfNoValueTF{#2}
			{\ensuremath{#1.}}
			{\ensuremath{#1 \, &\cause \, #2}}}
	\NewDocumentCommand\cdrule{m+g}{%
		\IfNoValueTF{#2}
			{\ensuremath{#1.}}
			{\ensuremath{#1 & \cause & #2}}}
	\newcommand{\algrule}[4]{
	\hbox{{#1}:}& 
	\quad #2 ~\longrightarrow~ #3 
	\hbox{~ if } #4\\
	}
	\newcommand{\AlgoRule}[4]{
	\ifenv{array}
	{\algrule{#1}{#2}{#3}{#4}}
		{ERROR: using AlgoRule in unsupported environment: \@currenvir}
	}
\newcommand{\commentstyle}{\color{Gray}}
	\lstdefinelanguage{idp}{
		morekeywords=[1]{query(}, 
		morekeywords=[2]{namespace,vocabulary,theory,structure,procedure,term,set,formula, spec, specification,query},
		morekeywords=[3]{include,using,type,isa,contains,partial,extern,LFD,GFD,constructed,from,constraint,pred,supertype,of,subtype,define},
		morekeywords=[4]{int,float,char,string,nat},
		morekeywords=[5]{if,then,else,for,end},
		morecomment=[s]{/*}{*/},	
		morecomment=[l]{//}
	}
	\newcommand{\ignore}[1]{}
	\newcommand{\namedcomment}[3]{%
		\ifthenelse{\boolean{nocomments}}%
		{}
		{
			\ifthenelse{\boolean{commentmargin}}%
				{ {\color{#3} \marginpar{\color{#3}\sc #2}#1}  }
				{  {\color{#3} {\sc #2}: #1}  }
		}%
	}
	\newcommand{\mnamedcomment}[3]{\ifthenelse{\boolean{nocomments}}{}{{\marginpar{ \color{#3}{\sc #2}:#1}}}}
	\newcommand{\todo}[1]{\namedcomment{#1}{TODO}{blue}}
	\newcommand{\bart}[1]{\namedcomment{#1}{bb}{red}}
	\newcommand{\joachim}[1]{ \namedcomment{#1}{jj}{Sepia}}
\newcommand\setcitation[2]{%
  \csdef{mycommoncitation#1}{#2}}
\newcommand\getcitation[1]{%
  \csuse{mycommoncitation#1}}
\newcommand\mycite[1]{%
      \ifcsname mycommoncitation#1\endcsname%
   \cite{\getcitation{#1}}%
  \else%
    \cite{#1}%
  \fi%
}	
\newcommand\mycitet[1]{%
      \ifcsname mycommoncitation#1\endcsname%
   \citet{\getcitation{#1}}%
  \else%
    \citet{#1}
  \fi%
}
\tikzstyle{litnode}=[draw,circle,minimum size=.75cm]
\tikzstyle{justified}=[dashed]
\begin{document}

\newcommand{\atomof}[1]{\m{|#1|}}
\newcommand\pstruct{\m{\mathcal{I}}}
\newcommand\defs[1]{\m{\mathit{defs}(#1)}}
\newcommand\opens[1]{\m{\mathit{opens}(#1)}}
\newcommand{\negate}{\mathord{\sim}}
\newcommand{\Val}[2]{\m{V_{#1}(#2)}}
\newcommand{\rels}[2]{\m{\mathcal{R}_{{#1},{#2}}}}
\newcommand{\drels}[2]{\m{\mathcal{DR}_{#1}(#2)}}
\newcommand\expl{\m{\mathit{expl}}}
\newcommand{\dep}{\m{\mathit{dd}}}
\newcommand{\litsof}[1]{\m{\overline{#1}}}
\newcommand{\ptlit}{\m{\mathit{p_\theory}}}
\renewcommand\false{\color{red}{DID YOU MEAN lfalse instead of false?}}
\renewcommand\true{\color{red}{DID YOU MEAN ltrue instead of true?}}
\newcommand{\solverstate}{\m{\mathcal{S}}}
\newcommand{\theoremref}[1]{Theorem~\ref{#1}}
\newcommand{\exampleref}[1]{Example~\ref{#1}}
\newcommand{\sectionref}[1]{Section~\ref{#1}}
\newcommand{\definitionref}[1]{Definition~\ref{#1}}
\newcommand{\justdef}{\m{\D_{j}}}

\newcommand{\hyph}{-\penalty0\hskip0pt\relax} 

\renewcommand{\cite}[1]{\citep{#1}}

\pdfinfo{
/Title (Implementing a Relevance Tracker Module)
/Author (Joachim Jansen, Jo Devriendt, Bart Bogaerts, Gerda Janssens, Marc Denecker)
}

\title{Implementing a Relevance Tracker Module}
\author{Joachim Jansen$^1$, Jo Devriendt$^1$, Bart Bogaerts$^{2,1}$,\\ Gerda Janssens$^1$, Marc Denecker$^1$\\
$^1$\email{first.lastname@kuleuven.be}, $^2$\email{bart.bogaerts@aalto.fi}}
\institute{$^1$KU Leuven, Leuven, Belgium\\$^2$Aalto University, Espoo, Finland}

\maketitle

\begin{abstract}
PC(ID) extends propositional logic with inductive definitions: rule sets under the well-founded semantics. 
Recently, a notion of \emph{relevance} was introduced for this language. 
This notion determines the set of undecided literals that can still influence the satisfiability of a PC(ID) formula in a given partial assignment. 
The idea is that the PC(ID) solver can make decisions only on relevant literals without losing soundness and thus safely ignore irrelevant literals. 

One important insight that the relevance of a literal is completely determined by the current solver state.
During search, the solver state changes have an effect on the relevance of literals.
In this paper, we discuss an incremental, lightweight implementation of a relevance tracker module that can be added to and interact with an out-of-the-box SAT(ID) solver.

\end{abstract}

\section{Introduction}

Since the addition of conflict-driven clause learning \cite{Marques-SilvaS99},
SAT solvers have made huge leaps forward. 
Now that these highly\hyph performant SAT-solvers exist, research often stretches \emph{beyond SAT} by extending the language supported by SAT with richer language constructs.
Research fields such as \SMT \mycite{SMT}, \CP \mycite{CP} in the form of lazy clause generation \mycite{LCG}, or \ASP \mycite{ASP} could be seen as following this approach.
In this paper, we focus on the logic PC(ID): the Propositional Calculus extended with Inductive Definitions~\cite{lpar/MarienWD07}.
The satisfiability problem for PC(ID) encodings is called SAT(ID)~\cite{sat/MarienWDB08}.
SAT(ID) can be formalised as SAT modulo a theory of inductive definitions and is closely related to answer set solving.
In fact, all the work we introduce in this paper is also applicable to so-called generate-define-test answer set programs.

Recently we \mycite{relevance} defined a notion of \emph{relevance} for SAT(ID). 
The idea is that we identify a set of literals that can contribute to a solution, and then limit the solver to only make choices on those literals. 
We call such literals \emph{relevant}. 
Furthermore, instead of searching for a variable assignment that satisfies the PC(ID) theory, we search for a \emph{partial} assignment that contains sufficient information to guarantee satisfiability.  
Our approach is based on the notion of \emph{justifications}~\cite{DeneckerS93,lpnmr/DeneckerBS15}. 
As a small example, consider the following theory.

 \begin{ltheo}
 \ptlit.\\
\begin{ldef}
\LRule{\ptlit}{a \land b }\\
\LRule{a}{d\lor \lnot e \lor f}\\
\LRule{b}{c \lor \lnot g\lor h}\\
\LRule{e}{f\lor \lnot h\lor i}
\end{ldef}
\end{ltheo}

This theory contains one constraint, that \ptlit must hold, and a definition (between `\{' and `\}') of \ptlit in terms of variables $a$ to $i$. 
One way to check satisfiability would be to generate an assignment of all variables that satisfies the above theory (this is the classical approach to solving such problems). 
What we do, on the other hand, is to search for a \emph{partial} assignment to these variables such that $\ptlit$ is \emph{justified} in that partial assignment. Consider for example the partial assignment where \ptlit, $a$, $b$, $c$ and $d$ are true and everything else is unknown.
In this assignment, $a$ and $b$ are \emph{justified} because $d$ and $c$ hold respectively; \ptlit is \emph{justified} because both $a$ and $b$ are justified. This suffices to determine satisfiability of the theory, without considering the definition of $e$ for instance.  

Intuitively, a literal is \emph{relevant} if it can contribute to justifying the theory. In the above example, as soon as $d$ is assigned {\em true}, the variable $e$ becomes \emph{irrelevant}. From that point onwards, search should not take $e$'s defining rule into account. 

\mycitet{relevance} have studied relevance extensively from the theoretical perspective and ran experiments to determine how relevance influences the number of choices and the number of conflicts a solver encounters. 
These experiments concluded that VSIDS chooses a significant number of relevant literals, and that prevention of decisions on irrelevant literals can lead to significant performance gains in some hand-constructed examples.
The actual implementation has not yet been described in detail.
That is exactly the goal of this paper. 
The insights presented in this paper should make it possible to implement relevance in other solvers as well, for instance in ASP solvers. 
In fact, our implementation heavily relies on components that are already present in modern ASP solvers. 
To be precise, we reuse propagation mechanisms for inductive definitions and an unfounded set detection algorithm. 

The main contributions of this paper are 
\begin{inparaenum} 
  \item the introduction of a method to keep track of justification status in a SAT(ID) solver,
  \item the introduction of a method to keep track of relevance status in a SAT(ID) solver, and
  \item a discussion on the properties of these implementations.
\end{inparaenum}


\section{Preliminiaries: SAT(ID), Relevance}
\label{sec:prelims}
Here we give a short introduction on PC(ID), SAT(ID), justifications, relevance, and recall how relevance can be exploited to improve PC(ID) solvers.
For a more elaborate exposition we refer to~\mycitet{relevance}.

\subsection{PC(ID)}
\label{sec:pcid}
We briefly recall the syntax and semantics of Propositional Calculus extended with Inductive Definitions (PC(ID))~\cite{phd/Marien09}. 

A truth value is one of $\{\ltrue,\lfalse,\lunkn\}$; $\ltrue$ represents \emph{true}, \lfalse \emph{false} and \lunkn \emph{unknown}. The truth order $\leqt$ on truth values is given by $\lfalse\leqt\lunkn\leqt\ltrue$, the precision order $\leqp$ is given by $\lunkn \leqp \lfalse$ and $\lunkn\leqp\ltrue$. 
Let \voc be a finite set of symbols called \emph{atoms}. A \emph{literal} \lit is an atom $p$ or its negation $\lnot p$.
In the former case, we call \lit  \emph{positive}, in the latter, we call \lit \emph{negative}.
We use $\litsof{\voc}$ to denote the set of all literals over $\voc$.
If $\lit$ is a literal, we use $\atomof\lit$ to denote the atom of $\lit$, i.e., to denote $p$ whenever $\lit=p$ or $\lit = \lnot p$.
We use $\negate \lit$ to denote the literal that is the negation of $\lit$, i.e., $\negate p = \lnot p$ and $\negate \lnot p = p$.
Propositional formulas are defined as usual. We use $\varphi \limplies \psi$ for material implication, i.e., as a shorthand for $\lnot \varphi \lor \psi$. 

A \emph{partial interpretation} \pstruct is a mapping from $\voc$ to truth values. 
We use the notation $\{p_1^\ltrue,\dots,p_n^\ltrue, q_1^\lfalse,\dots,q_m^\lfalse\}$ for the partial interpretation that maps the $p_i$ to \ltrue, the $q_i$ to \lfalse and all other atoms to \lunkn.
We call a partial interpretation \emph{two-valued} if it does not map any atom to \lunkn. 
If $\pstruct$ and $\pstruct'$ are partial interpretations, we say that $\pstruct$ is less precise than $\pstruct'$ (notation $\pstruct\leqp\pstruct'$) if for all $p\in \voc$, $\pstruct(p)\leqp\pstruct'(p)$. 
If $\varphi$ is a propositional formula, we use $\varphi^\pstruct$ to denote the truth value ($\ltrue$, $\lfalse$ or $\lunkn$) of $\varphi$ in $\pstruct$, based on the Kleene truth tables \cite{Kleene38}.
If \pstruct is a partial interpretation and \lit a literal, we use $\pstruct[\lit:\ltrue]$ to denote the partial interpretation equal to \pstruct, except that it interprets $\lit$ as $\ltrue$ (and similar for \lfalse, \lunkn).
If $\voc'\subseteq \voc$,   we use the notation $\pstruct|_{\voc'}$ to indicate the {\it restriction} of $\pstruct$ to symbols in $\voc'$. This restriction is a partial interpretation of $\voc$ and satisfies $\pstruct|_{\voc'}(p) = \lunkn$ if $p \notin \voc'$ and $\pstruct|_{\voc'}(p) = \pstruct(p)$ otherwise.

In the rest of this text, when we just say \emph{interpretation}, we mean a two-valued partial interpretation. 
An interpretation \struct is completely characterised by the set of atoms $p$ such that $\struct(p)=\ltrue$. As such, slightly abusing notation, we often identify an interpretation with a set of atoms. 
%

An inductive definition \D over $\voc$ is a finite set of rules of the form $p\lrule \varphi$ where $p\in \voc$ and $\varphi$ is a propositional formula over $\voc$. 
We call $p$ the head of the rule and $\varphi$ the body of the rule. 
We call $p$ \emph{defined in} \D if $p$ occurs as the head of a rule in $\D$. The set of all symbols defined in \D is denoted by $\defs\D$. 
All other symbols are called \emph{open in} \D. The set of open symbols in \D is denoted $\opens\D$.  We say that a literal $\lit$ is \emph{defined in} \D if $\atomof{\lit}\in\defs\D$. 
We use the \emph{parametrised well-founded semantics} for inductive definitions \cite{lpnmr/DeneckerV07}.
That is, interpretation \struct is a model of \D  (denoted $\struct\models\D$) if $\struct$ is the well-founded model of $\D$ in context $\struct|_{\opens\D}$.
We call an inductive definition \emph{total} if for every interpretation \struct of the open symbols, the well-founded model in context \struct is a two-valued interpretation.

A \emph{PC(ID)} theory \theory over \voc is a set of propositional formulas, called constraints, and inductive definitions over \voc. Interpretation \struct is a model of \theory if \struct is a model of all definitions and constraints in \theory. 
Without loss of generality~\cite{phd/Marien09}, we assume that every PC(ID) theory is in {\bf \DEFNF}, where $\theory = \{\ptlit, \D\}$ and
\begin{compactitem}
	\item $\ptlit$ is an atom,
	\item $\D$ is an inductive definition defining \ptlit,
	\item every rule in $\D$ is of the form $p \lrule l_1\odot \dots\odot l_n$, where $\odot$ is either $\land$ or $\lor$, $p$ is an atom, and each of the $l_i$ are literals, 
\item every atom $p$ is defined in at most one rule of $\D$. 
\end{compactitem}
A rule in which $\odot$ is $\land$, respectively $\lor$ is called a \emph{conjunctive}, respectively \emph{disjunctive}, rule.
The rules in a definition \D impose a \emph{direct dependency relation}, denoted $\dep_\D$, between literals, defined as follows.
For every rule $p \lrule l_1\odot \dots\odot l_n$ in \D, $\dep_\D$ contains $(p, l_i)$ and $(\negate p, \negate l_i)$ for all $1 \leq i \leq n$.
The \emph{dependency graph} of \D is the graph $G_\D = (\litsof{\voc},\dep_\D)$.

For the remainder of the paper, we assume that some PC(ID) theory $\theory=\{\ptlit,\D\}$ is fixed. 

It has been argued many times before \cite{Denecker98,tocl/DeneckerT08,KR/DeneckerV14} that all sensible definitions in mathematical texts are total definitions. Following these arguments, in the rest of this paper we assume \D to be a total definition. 

The satisfiability problem for PC(ID), i.e., deciding whether a PC(ID) theory has a model, is called \emph{SAT(ID)}.
This problem is NP-complete~\cite{sat/MarienWDB08}. 

\subsection{Justifications}
Consider a directed graph $G= (V,E)$, with $V$ a set of nodes and $E$ a set of edges.
If $G$ contains an edge from $l$ to $l'$ (i.e., $(l,l') \in E$), we say that $l$ is a parent of $l'$ in $G$ and that $l'$ is a child of $l$ in $G$.
A node $l$ is called a \emph{leaf} of $G$ if it has no children in $G$; otherwise it is called \emph{internal} in $G$.
Let $G' = (V',E')$ be another graph.
We define the union of two graphs (denoted $G\cup G'$) as the graph with vertices $V \cup V'$ and edges $E\cup E'$. 

Suppose $\lit$ is a literal with $p=\atomof\lit$ and $p\in\defs\D$ with defining rule $p \lrule l_1\odot \dots\odot l_n$.
A set of literals $J_d$ is a \emph{direct justification} of $\lit$ in \D if one of the following holds:
\begin{compactitem}
	\item $\lit=p$, $\odot$ is $\land$, and $J_d=\{l_1,\dots,l_n\}$,
	\item $\lit=p$, $\odot$ is $\lor$, and  $J_d=\{l_i\}$ for some $i$,
	\item $\lit=\lnot p$, $\odot$ is $\land$, and $J_d=\{\negate l_i\}$ for some $i$,
	\item $\lit=\lnot p$, $\odot$ is $\lor$, and  $J_d=\{\negate l_1,\dots,\negate l_n\}$.
\end{compactitem}
Note that a direct justification of a literal can only contain children of that literal in the dependency graph.

A \emph{justification}~\cite{DeneckerS93,lpnmr/DeneckerBS15} $J$ of a definition \D is a subgraph of $G_\D$, such that each internal node $\lit\in J$ is a defined literal and the set of its children is a direct justification of \lit in \D.
We say that $J$ \emph{contains} \lit if $\lit$ occurs as node in $J$. 
A justification is \emph{total} if none of its leaves are defined literals.
A justification can contain \emph{cycles}.\footnote{In this text, we assume that \D is finite; in this case cycles are simply loops in the graph. The infinite case is a bit more subtle, and an adapted definition of cycle is required to maintain all results presented below. }
A cycle is called \emph{positive} (resp. \emph{negative}) if it contains only positive (resp.~negative) literals.
It is called a \emph{mixed} cycle otherwise. 

If $J$ is a justification and \pstruct a (partial) interpretation, we define the value of $J$ in $\pstruct$, denoted $\Val{\pstruct}{J}
$ as follows:
\begin{compactitem}
\item $\Val{\pstruct}{J} = \lfalse$ if $J$ contains a leaf $\lit$ with $\lit^\pstruct = \lfalse$ or a positive cycle (or both).
\item $\Val{\pstruct}{J} = \lunkn$ if $\Val{\pstruct}{J} \neq \lfalse$  and $J$ contains  a leaf $\lit$ with $\lit^\pstruct = \lunkn$ or a
mixed cycle (or both).
\item $\Val{\pstruct}{J} = \ltrue$  otherwise (all leaves are $\ltrue$ and cycles, if any, are negative).
\end{compactitem}

A literal $\lit$ is \emph{justified} (in \pstruct, for \theory) if there exists a total justification $J$ (of \D) that contains $\lit$ such that $\Val{\pstruct}{J}=\ltrue$.
In this case, we say that such a $J$ \emph{justifies} $\lit$ (in \pstruct, for \theory).
It follows from the definition that it is not possible that both $l$ and $\neg l$ are justified in the same in the same interpretation. 

The {\em justification status} of an atom $p$ (in \pstruct, for \theory) is defined as follows.
The justification status of $p$ is {\em true} if and only if the literal $p$ is justified in \pstruct for \theory.
The justification status of $p$ is {\em false} if and only if the literal $\neg p$ is justified in \pstruct for \theory.
Otherwise the justification status of $p$ is {\em unknown}.

\citet{DeneckerS93} and later also \citet{lpnmr/DeneckerBS15} showed that many semantics of logic programs can be captured by justifications. 


\subsection{Relevance}
Now, we recall the central definitions and theorems related to relevance. For a more detailed exposition of the theoretical foundations, we refer to \mycitet{relevance}. 

\begin{theorem}[\mycite{relevance}; Theorem 3.1]\label{thm:relmain}
	$\theory$ is satisfiable if and only if there exists a partial interpretation $\pstruct$ and a justification $J$ that justifies $\ptlit$ in $\pstruct$.
\end{theorem}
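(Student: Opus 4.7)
The plan is to prove the two implications separately, each time reducing to the well-known correspondence between justifications and the well-founded semantics referenced earlier~\cite{DeneckerS93,lpnmr/DeneckerBS15}. In essence, this correspondence states that, for a two-valued interpretation $O$ of the open symbols $\opens\D$, a defined literal $\lit$ is true in the well-founded model of $\D$ in context $O$ if and only if there is a total justification $J$ containing $\lit$ with $\Val{O}{J} = \ltrue$. Once this fact is in hand, both directions of the theorem follow.

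For the forward direction, assume $\theory$ is satisfiable and pick a two-valued model $\struct$. Then $\struct(\ptlit) = \ltrue$ and $\struct$ equals the well-founded model of $\D$ in context $\struct|_{\opens\D}$. Applying the correspondence to the literal $\ptlit$ yields a total justification $J$ with $\Val{\struct}{J} = \ltrue$, so the pair $\pstruct := \struct$ and $J$ witnesses the right-hand side.

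For the backward direction, assume $\pstruct$ is a partial interpretation and $J$ is a total justification of $\ptlit$ with $\Val{\pstruct}{J} = \ltrue$. Because $J$ is total, each of its leaves is a literal of an open symbol, and each such leaf is true in $\pstruct$. Extend $\pstruct|_{\opens\D}$ to any two-valued interpretation $O$ of $\opens\D$ that preserves the value of every leaf of $J$; such a completion exists since the leaves are already assigned by $\pstruct$. Let $\struct$ be the well-founded model of $\D$ in context $O$, which exists and is two-valued by the totality assumption on $\D$; by construction $\struct \models \D$. The leaves of $J$ keep their truth values under $O$ (and hence under $\struct$), while the purely structural classification of the cycles of $J$ is unchanged, so $\Val{\struct}{J} = \ltrue$. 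Invoking the correspondence in the other direction gives $\struct(\ptlit) = \ltrue$, whence $\struct \models \theory$.

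The bulk of the work and the main obstacle are not in the constructions themselves, which are routine, but in stating and applying the justification--well-founded correspondence in exactly the form needed. In particular, the backward direction relies on a small monotonicity-in-precision observation: increasing the precision of an interpretation cannot decrease the value of a total justification that was already true, because true leaves stay true and no new positive cycles can emerge. Verifying this observation is the only real bookkeeping required beyond citing the prior results.
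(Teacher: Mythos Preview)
The paper does not actually prove this theorem; it is imported verbatim from prior work (cited as Theorem~3.1 of \mycite{relevance}) and used as a preliminary, so there is no in-paper argument to compare against.

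On its own merits your proof is correct and is the natural one. The forward direction is immediate from the justification characterisation of the well-founded model. In the backward direction the only nontrivial step is the one you flag: completing $\pstruct|_{\opens\D}$ to a two-valued $O$ that keeps all leaves of $J$ true, then using totality of $\D$ to obtain a two-valued well-founded model $\struct$ with $\struct|_{\opens\D}=O$. Since the leaves of a total justification are open literals, $\Val{O}{J}=\Val{\struct}{J}=\ltrue$, and the correspondence yields $\struct(\ptlit)=\ltrue$. Your monotonicity-in-precision remark is right but in fact not even needed once you observe that the value of a total justification depends only on the interpretation of the open symbols occurring at its leaves.
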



\begin{definition}[\mycite{relevance}; Definition 3.2] \label{def:relevant}
Given a PC(ID) theory $\theory=\{\ptlit,\D\}$ and a partial interpretation \pstruct, we inductively define the set of relevant literals, denoted $\rels{\theory}{\pstruct}$, as follows
\begin{compactitem}
 \item $\ptlit$ is relevant if $\ptlit$ is not justified,
 \item $l$ is relevant if $\lit$ is not justified and there exists some $\lit'$ such that $(\lit',\lit)\in \dep_\D$ and $\lit'$ is relevant.
\end{compactitem}
\end{definition}

The central theorem regarding relevance shows that any search algorithm that arrives in a state in which \ptlit is justified by deciding on a literal $l$ that is irrelevant can also arrive in such a state without deciding on $l$.
Hence, if a literal $l$ is irrelevant, choosing on $l$ does not help justifying \ptlit.

\begin{theorem}[\mycite{relevance}; Theorem 3.5] \label{thm:choice}
Let $\theory=\{\ptlit,\D\}$ be a PC(ID) theory. Suppose \pstruct is a partial interpretation and $\lit$ a literal such that $\pstruct(\atomof{\lit}) = \lunkn$ and $\lit$ is not relevant in \pstruct. 
If $\ptlit$ is justified in some partial interpretation  $\pstruct'$ more precise than $\pstruct$, then $\ptlit$ is also justified in $\pstruct'[\lit:\lfalse]$ and in $\pstruct'[\lit:\ltrue]$. 
\end{theorem}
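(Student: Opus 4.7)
The strategy is to construct, for each $v\in\{\ltrue,\lfalse\}$, a total justification $J^v$ of \ptlit with value \ltrue in $\pstruct'[\lit:v]$, starting from the total justification $J$ guaranteed by the hypothesis.

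I would first handle the easy case where $\atomof{\lit}$ is defined in $\D$. Then both $\lit$ and $\negate\lit$ are defined literals, so in any total justification they must appear only as internal nodes (leaves of a total justification being open literals). Because $\Val{\pstruct}{J}$ is determined solely by the leaves and cycles of $J$, flipping $\lit$'s interpretation cannot change this value, so the original $J$ already works for both choices of $v$. Hence it suffices to treat the case where $\atomof{\lit}$ is open.

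For the open case, the plan is to ``short-circuit'' $J$ at literals already justified under the coarser $\pstruct$. First, restrict $J$ to the subgraph $J|_\ptlit$ reachable from \ptlit, which is still a total justification with value \ltrue in $\pstruct'$. Let $F$ (the \emph{frontier}) be the set of literals $\lit^*$ reachable in $J|_\ptlit$ from \ptlit via a path of literals not justified in $\pstruct$, with $\lit^*$ itself justified in $\pstruct$. For each $\lit^*\in F$ pick a total justification $J^*_{\lit^*}$ of $\lit^*$ in $\pstruct$ with value \ltrue, and form $J^v$ by keeping the above-frontier part of $J|_\ptlit$ and splicing each $J^*_{\lit^*}$ in place of $\lit^*$'s original sub-justification. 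This construction exploits two facts: leaves of the spliced $J^*_{\lit^*}$'s are true in $\pstruct$ and therefore cannot equal $\lit$ or $\negate\lit$ (both unknown in $\pstruct$); and any above-frontier leaf equal to $\lit$ or $\negate\lit$ would exhibit a chain from \ptlit through unjustified literals to that literal, i.e.\ would witness its relevance, contradicting the irrelevance hypothesis.

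To close the argument, I would verify that $J^v$ is a legitimate total justification whose value in $\pstruct'[\lit:v]$ is \ltrue. Every internal node has a valid direct justification by construction, and each leaf is open and evaluates to \ltrue in $\pstruct'[\lit:v]$: spliced leaves are true in $\pstruct$, hence in $\pstruct'[\lit:v]$ by $\pstruct\leqp\pstruct'[\lit:v]$; above-frontier leaves retain their value \ltrue from $\pstruct'$ because they differ from $\lit$ and $\negate\lit$. The delicate step I expect to be the main obstacle is checking that $J^v$ contains no positive or mixed cycle: each constituent piece ($J|_\ptlit$'s above-frontier part and each $J^*_{\lit^*}$) has only negative cycles, but one must rule out new cycles produced by the stitching. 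The key observation is that in $J^v$ edges flow only from the above-frontier portion into the frontier, and from each frontier literal strictly within its own $J^*_{\lit^*}$, so every cycle of $J^v$ lies entirely inside a single piece and inherits its negativity.
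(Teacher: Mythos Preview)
The paper does not prove this theorem at all: it is merely quoted from the cited relevance paper (as ``Theorem~3.5'') and used as background, so there is no in-paper argument to compare your proposal against.

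That said, your strategy has a genuine gap. Your splicing construction and the case where $\atomof{\lit}\in\defs{\D}$ are fine, and for $\pstruct'[\lit:\lfalse]$ your relevance-chain argument works: an above-frontier leaf equal to $\lit$ would yield a $\dep_\D$-path from $\ptlit$ through unjustified literals to $\lit$, making $\lit$ relevant and contradicting the hypothesis. The problem is the companion claim for $\pstruct'[\lit:\ltrue]$. There the dangerous leaf is $\negate\lit$, and your chain only shows that \emph{$\negate\lit$} would be relevant. The hypothesis, however, asserts irrelevance of $\lit$, not of $\negate\lit$, and relevance is \emph{not} closed under negation with the definition given here (the dependency edges are symmetric under negation, but the base case singles out $\ptlit$ only). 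Concretely, take $\D=\{\ptlit\lrule a\}$, $\pstruct=\emptyset$, and $\lit=\lnot a$: then $\lnot a$ is irrelevant while $a$ is relevant; with $\pstruct'=\{a^{\ltrue}\}$ the literal $\ptlit$ is justified in $\pstruct'$ but not in $\pstruct'[\lnot a:\ltrue]=\{a^{\lfalse}\}$. So under the hypothesis as literally stated your argument cannot be completed for the $\pstruct'[\lit:\ltrue]$ direction, and indeed no argument can. The fix is to read the hypothesis as ``neither $\lit$ nor $\negate\lit$ is relevant in $\pstruct$'' (equivalently, the atom $\atomof{\lit}$ is irrelevant), which is also what the informal discussion around the theorem suggests; under that reading your proof goes through, modulo the cycle check you already flagged.
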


Consequently, the work suggests adapting SAT(ID) solvers such that they 
\begin{inparaenum}
 \item do not make decisions on irrelevant literals, and
 \item stop searching when there are no relevant literals left.
\end{inparaenum}
This requires the underlying solver to keep track of which literals are relevant.
This task is incremental in nature: small changes to the state of the solver will result in small changes to the relevance of literals. 
Since modern solvers work with hunderds of thousands of variables and go through millions of assignment in quick succession, it is essential to do these changes as efficiently as possible.

\section{The Basic Framework}

The aim of this work is to discuss the implementation of an algorithm to keep track of relevant literals.

As said in Theorem~\ref{thm:relmain}, the solver aims to arrive at a state $\solverstate$ where $\ptlit$ is justified in $\pstruct$.
The solver does this by making decisions, performing propagation, and backtracking.
To prevent the solver from making ``useless'' decisions, we need to know whether literals are relevant or not in $\pstruct$.

We consider the underlying solver to have an internal state \solverstate of the form $\solverstate = \langle \litsof{\voc}, \theory, \pstruct\rangle$, with
\begin{inparaenum}
  \item $\litsof{\voc}$ denoting the set of literals used in the solver, 
  \item $\theory = \{\ptlit, \D\}$ a \DEFNF theory, and
  \item $\pstruct$ the current partial interpretation in the solver.
\end{inparaenum}

During the search process, the (CDCL) solver adds learned conflict clauses to the theory.
However, learned conflict clauses are logical consequences of the theory and because of this we do not consider them to be a part of the theory \theory in \solverstate.
Instead, \theory is reserved for non-learned clauses.
We assume \theory to remain static during the search process.
This assumption is valid in most {\em ground-and-solve} systems.
Recent work focuses on interleaving this process~\mycite{lazygrounding}.
Extending this work to allow for a changing theory is future work, but should be of limited complexity given the framework we present here. 

The relevance tracker needs to take into account changes in the solver state, in particular in \pstruct. Before we define the interface between the relevance tracker and the solver, we discuss the solver state and its changes.


During the search process, the solver iteratively performs one of the following state changes:
\begin{itemize}
 \item $\langle \litsof{\voc}, \theory, \pstruct\rangle \mapsto \langle \litsof{\voc}, \theory, \pstruct[l:\ltrue]\rangle$ a literal $l$ becomes {\em true}, or 
 \item $\langle \litsof{\voc}, \theory, \pstruct\rangle \mapsto \langle \litsof{\voc}, \theory, \pstruct[l:\lunkn]\rangle$ a literal $l$ becomes {\em unknown}.
\end{itemize}
Note that this set of operations allows the solver to make a literal $l$ {\em false} by making literal $\negate l$ {\em true}.

In order to get the necessary information about the changes of the solver and to implement the above revision problem, the relevance tracker listens to notifications.
The relevance tracker supports the following interface to the underlying solver.
\begin{description}
 \item [notifyBecomesTrue] a literal $l$ becomes {\em true} in \pstruct
 \item [notifyBecomesUnknown] a literal $l$ becomes {\em unknown} in \pstruct
 \item [isRelevant] query whether a given literal $l$ is relevant (returns a boolean value)
\end{description}

Methods {\bf notifyBecomesTrue} and {\bf notifyBecomesUnknown} must be cal\-led by the underlying solver when a literal has become true, respectively unknown.
The {\bf isRelevant} method is used by the solver to ask the module whether the given literal is relevant.
The relevance information allows the solver to change its underlying heuristic, selecting only relevant literals.

\ignore{
\section{Relevance as a Meta-Definition on Top of Solver State}
\label{sec:metadef}
\bart{Joachim, had jij geen refernties naar revision papers die je hier ging plaatsen. Als we (bijna) niets zeggen over revisie, dan is de waarde van ``relevance as a meta definition'' wel heel klein. Immers... De paper zou gaan over implementatie dingen...}
\joachim{Deze sectie mag er van mij part uit. Ik heb deze gedachtengang wel gevolgd bij het implementeren, maar het is niet nodig dat die in deze paper naar boven komt. Maakt de hele zaak zelfs verwarrender voor ASP mensen zelfs denk ik.}

In this section, we rephrase the (informal) definition of relevance as a set of (formal) rules. To phrase it differently, we give a definition of relevance in the logic \foid~\mycite{fodot}. \foid extends PC(ID) by supporting
\begin{inparaenum}
 \item in the vocabulary, {\em types} as a sets of values,
 \item in the vocabulary, predicate symbols with a given type signature for thei arguments, and
 \item in the theory, existential ($\exists$) or universal ($\forall$) quantification over variables of a given type.
\end{inparaenum}

We introduce a \foid meta-vocabulary to reify the internal state of the underlying solver.
The meta-vocabulary contains a single type {\bf Literal} that contains the set of all literals.
In addition, the meta-vocabulary contains the following predicate symbols (using type {\bf Literal} for all arguments).
\begin{description}
 \item [$\dep_\D(l',l)$]: literal $l$ occurs in the body of the rule defining $l'$ in \theory
 \item [justified($l$)]: literal $l$ is justified in \pstruct, for \theory
 \item [relevant($l$)]: literal $l$ is relevant in \pstruct, for \theory
\end{description}

\noindent
Using this meta-vocabulary, the definition of relevance can then be specified using a meta-definition.
\[\left\{\begin{array}{ll}
relevant(l) \leftarrow & l = \ptlit~\land~\lnot justified(\ptlit).\\
relevant(l) \leftarrow & \lnot justified(l)~\land~(\exists l' : relevant(l')~\land~\dep_\D(l',l)).
\end{array}\right\}\]

Note that the relevance of a literal $l$ and also the justifiedness of a literal is completely determined by the theory and the current status of the solver.
As said before we assume \theory and thus $\dep_\D(l',l)$ to be static.
But, whether a literal is justified or not can change when the status of the solver changes.
In the above \foid definition the only open symbol is {\bf justified}.
We can then view the computation of the changed value of $relevant/1$ based on changes in the interpretations of the open symbol {\bf justified($l$)} as a model revision task.
As such, in principle, tracking relevance can be tackled by one of the many existing model revision algorithms \cite{,,,} or database view maintenance algorithms \cite{,,,} \todo{joachim, refs?}. We, however, opted for a standalone implementation \todo{since?}, to be detailed in the following sections. 

In order to implement such a revision algorithm, the relevant tracker needs to have information about which literals are justified. 
In section~\ref{sec:justtracking} we show how also this information can be extracted from the solver, reusing existing propagation mechanisms. 
}

\section{Deriving the justification status of literals}
\label{sec:justtracking}
The definition of relevance relies on knowledge about which literals are \emph{justified} in the solver. In this section, we discuss how we implemented extraction of this information. 
We opted to implement a method that re-uses the underlying SAT(ID) solver to keep track of the justification status of literals. 
The method creates a new atom, called the ``justification atom'', for each defined atom $p$, denoted as $j(p)$.
We call a literal $j(p)$ or $\lnot j(p)$ a {\em justification literal}.

The intended interpretation of $j(p)$ is that $j(p)$ is true iff $p$ is justified, $j(p)$ is false iff $\lnot p$ is justified and $j(p)$ is unknown otherwise. 
To ensure that justification literals indeed get the right value, an extra PC(ID) definition \justdef, denoted the ``justification definition'', is added to the theory \theory. 
\justdef is constructed based on the original definition \D in the following manner.
The existing definition \D is copied, except that every defined atom $p$ is replaced with the newly created atom $j(p)$.
Thus, of all the atoms in the original definition, only the open atoms remain.
\begin{example}\label{ex:justdef}
Transforming the original definition
\[\D = \left\{\begin{array}{rrrrrrrrr}
\ptlit & \leftarrow & c_1 & \land & c_2 & \land & c_3 & \land & c_4\\
c_1    & \leftarrow & \lnot b & \lor & \lnot d\\
c_2    & \leftarrow & a & \lor & b & \lor & \lnot c\\
c_3    & \leftarrow & \lnot b & \lor & e & \lor & \lnot f\\
c_4    & \leftarrow & d & \lor & f & \lor & \lnot a\\
f & \leftarrow & b & \lor & d\\
\end{array}\right\}\]
leads to the justification definition
\[\justdef = \left\{\begin{array}{rrrrrrrrr}
j(\ptlit) & \leftarrow & j(c_1) & \land & j(c_2) & \land & j(c_3) & \land & j(c_4)\\
j(c_1)    & \leftarrow & \lnot b & \lor & \lnot d\\
j(c_2)    & \leftarrow & a & \lor & b & \lor & \lnot c\\
j(c_3)    & \leftarrow & \lnot b & \lor & e & \lor & \lnot j(f)\\
j(c_4)    & \leftarrow & d & \lor & j(f) & \lor & \lnot a\\
j(f) & \leftarrow & b & \lor & d
\end{array}\right\}\]
\end{example}

In addition to the creation of this new definition \justdef, we prohibit the solver from making choices on these justification atoms.
Because of this, the value of all $j(p)$ will be purely the result of the underlying propagation mechanism for definitions.
Our claim is now that existing propagation mechanisms will propagate \emph{exactly} those literals that are justified. 
We assume a solver that performs \emph{unit propagation} and \emph{unfounded set propagation}~\cite{lpar/MarienWD07,ai/GebserKS12}, i.e., propagation that makes all atoms in an unfounded set false.

%
%

\begin{theorem}
 Let $\D$ be a (total) definition and $\pstruct$ a partial interpretation in which all defined symbols of $\D$ are interpreted as \lunkn. 
 Let $\lit$ be a defined literal in $\D$. 
 In this case $\lit$ is justified in \pstruct if and only if $\lit$ is derivable by unit propagation on the completion\footnote{The completion of a rule $p \leftarrow q$ is the underapproximation using the \FO sentence $p \lequiv q$, which demands that $p$ and $q$ hold equal truth values}\mycite{completion} of \D and unfounded set propagations.
\end{theorem}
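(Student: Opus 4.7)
My plan is to prove both directions separately, exploiting a simple structural observation about $J$. Since $\Val{\pstruct}{J}=\ltrue$, no cycle in $J$ can be positive (that would force value \lfalse) nor mixed (that would force value \lunkn); so the only cycles in $J$ are negative. Moreover, because $\pstruct$ maps every defined symbol to $\lunkn$, every \emph{leaf} of $J$ is an open literal whose value under $\pstruct$ equals its intended value in $J$. In particular, the positive internal nodes of $J$ induce an acyclic subgraph, and any cyclic dependency lives entirely among negative nodes.

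\textbf{Direction ($\Leftarrow$), every propagated literal is justified.} I would induct on the order in which literals are derived. A unit-propagation step on the completion $p\lequiv l_1\odot\dots\odot l_n$ that derives a new literal $\lit$ matches exactly one of the four cases of \emph{direct justification}: e.g.\ deriving $p$ from a conjunctive body requires all $l_i$ to be already-derived true, giving the direct justification $\{l_1,\dots,l_n\}$. By the inductive hypothesis each child is justified, and the union of a justification for each child plus the new direct-justification edges of $\lit$ yields a total justification of $\lit$, which is acyclic and hence has value $\ltrue$. For an unfounded-set propagation that makes all $p\in U$ false, I would use the defining property of an unfounded set: for every $p\in U$ with rule $p\lrule l_1\odot\dots\odot l_n$, either a conjunctive body has a literal $l_j$ that is currently false or is a positive $U$-atom, or every disjunct of a disjunctive body is of one of those two kinds. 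Pick corresponding direct justifications of $\neg p$ (a singleton $\{\negate l_j\}$ in the conjunctive case; the full set $\{\negate l_1,\dots,\negate l_n\}$ in the disjunctive case). The non-$U$ children are justified by induction, and the $U$-children are justified mutually through the newly-built negative cycles. Taking the union gives a total justification of each $\neg p$ whose only cycles are negative, hence value $\ltrue$.

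\textbf{Direction ($\Rightarrow$), every justified literal is propagated.} Given $J$ justifying $\lit$, I would show by induction that every internal node of $J$ is eventually derived. First, traverse the acyclic positive part of $J$ from leaves upward: at each internal positive literal $p$, its children in $J$ form a direct justification of $p$, which is exactly the pattern needed for unit propagation on the completion to fire, so $p$ is derived. Negative internal literals that do \emph{not} lie on any cycle of $J$ are handled identically (induct on depth). Finally, let $U=\{p\mid \negate p \text{ lies on a negative cycle of } J\}$. I would argue that, after the above propagations, $U$ is an unfounded set in the current partial interpretation: for each $p\in U$, inspect the direct justification of $\negate p$ in $J$; a non-cycle child $\negate l_j$ has already been derived (so $l_j$ is now false), while a cycle child $\negate l_j$ forces $l_j$ to be a positive occurrence of a $U$-atom. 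This is exactly the defining property of an unfounded set, so unfounded-set propagation then makes every $p\in U$ false, completing the derivation of all literals of $J$ including $\lit$.

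\textbf{Main obstacle.} The delicate step is the $(\Rightarrow)$ direction's unfoundedness argument, in particular the disjunctive case, where we need that \emph{every} body literal is simultaneously false or a positive $U$-atom at the time the unfounded-set step fires. The scheduling is what makes this work: the witnesses that render non-$U$ body literals false live in the acyclic part of $J$ and are safely derived first by unit propagation. The conjunctive/disjunctive bookkeeping of direct justifications is routine but needs to be handled carefully to line up with the asymmetric unfounded-set condition.
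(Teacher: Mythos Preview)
Your overall strategy coincides with the paper's sketch: build a correspondence in which unit-propagation steps correspond to direct-justification edges and unfounded-set steps correspond to the negative cycles of a justification. The $(\Leftarrow)$ direction is essentially right, though your claim that the assembled justification is ``acyclic'' is too strong: children that were themselves derived by an earlier unfounded-set step already contribute negative cycles. The clean formulation is to build \emph{one} global justification by recording, at each propagation step, the direct justification witnessing it; the propagation order then guarantees that every cycle lies inside a single unfounded-set batch and is therefore purely negative.

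The genuine gap is your scheduling in the $(\Rightarrow)$ direction. You propose to handle first all positive internal nodes, then all non-cycle negative nodes, and finally one unfounded-set step on $U=\{p\mid \lnot p \text{ lies on a cycle of }J\}$. This fails because a positive node of $J$ can have, as a child, a negative literal that lies on a cycle. Concretely, take the total definition
\[
a\lrule a\land \lnot o,\qquad b\lrule \lnot a,\qquad c\lrule c\lor \lnot b,
\]
with $o$ open and $\pstruct$ mapping everything to $\lunkn$. The justification $J$ for $\lnot c$ has edges $\lnot c\to\lnot c$, $\lnot c\to b$, $b\to\lnot a$, $\lnot a\to\lnot a$; here $b$ is a positive internal node whose only child $\lnot a$ sits on a negative cycle, so $b$ cannot be unit-propagated before that cycle is dealt with. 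Moreover your single set $U=\{a,c\}$ is \emph{not} unfounded at that stage, since the disjunct $\lnot b$ in $c$'s body is still unknown (not false, and not a positive $U$-atom). The correct scheduling is to take the strongly connected components of the part of $J$ reachable from $\lit$ and process them in reverse topological order: a trivial component fires by unit propagation on the completion, while each nontrivial component---necessarily consisting solely of negative literals, since $J$ has only negative cycles---is an unfounded set at the moment it is reached, because all body literals pointing outside the component have already been made false. This interleaving of unit and unfounded-set propagation is what the paper's sketch means by ``each justification induces a sequence of propagations''.
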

\begin{proof}[Sketch of the proof]
Intuitively, from a sequence of propagations, we can create a justification and vice versa: each justification induces a sequence of propagations. 
The correspondence is as follows. 
First for the completion, if \D contains a rule 
 $p \lrule l_1\land \dots \land l_n$, then this rule propagates $p=\ltrue$ if and only if each of the $l_i$ is true. 
 This corresponds to the justification 
 \[
  \begin{tikzpicture}

\def \n {5}
\def \radius {3cm}
\def \margin {8} 
  \node[draw, circle] at (1,0) (pnode) {$p$};

    \node[draw, circle] at (0,-1) (l1node) {$l_1$};
    \node[draw, circle] at (1,-1) (ldotsnode) {$\dots$};
    \node[draw, circle] at (2,-1) (lnnode) {$l_n$};
    \draw[->] (pnode) -- (l1node);
    \draw[->] (pnode) -- (ldotsnode);
    \draw[->] (pnode) -- (lnnode);
\end{tikzpicture}\]
And similar justification constructs can be defined for when $\lnot p$ is propagated or when the rule is disjunctive. 

Unfounded set propagation essentially corresponds to a justification of a set of negative facts by a negative cycle. 

The condition that a justification can have no mixed or positive cycles corresponds to the fact that propagation must happen in order. E.g., from the rule $p\lrule p \lor q$, $p$ can only be propagated if $q$ is true; $p$ cannot be propagated ``because $p$ is true''. 
\qed
\end{proof}

The previous theorem establishes that our approach works; a justification literal $j(p)$ will be propagated to true if $p$ is justified (note that $p$ is justified iff $j(p)$ is justified). 
It also explains why we use a duplicated definition: the theorem only holds if $\I$ is an $opens(\D)$ interpretation.
Since this cannot be enforced (we don't want to intrude in the solver's search), we make a copy and never make choices on the copied defined symbols.

Thus, we extend our solver state $\solverstate = \langle \litsof{\voc}, \theory, \pstruct\rangle$ to a $\solverstate' = \langle \litsof{\voc}', \theory', \pstruct', \voc' \rangle$ with
\begin{itemize}
 \item $\voc' = $ set containing the newly introduced justification atoms that the solver cannot decide on
 \item $\litsof{\voc}' = \litsof{\voc} \cup \litsof{\voc'}$
 \item $\theory' = \{\ptlit, \D'\}$ if $\theory = \{\ptlit, \D\}$ and $\D' = \D \cup \justdef$
 \item $\pstruct' = $ a partial structure over $\litsof{\voc}'$
\end{itemize}

With all this in place, we derive the interpretation for $justified(l)$ as follows.
\begin{itemize}
 \item $justified(p)$ is true if and only if $j(p)$ is true in $\pstruct'$, and
 \item $justified(\lnot p)$ is true if and only if $j(p)$ is false in $\pstruct'$.
\end{itemize}

\section{Implementing the relevance tracker}
\label{sec:impl}
The source code that implements the techniques discussed here can be found at \small \url{https://dtai.cs.kuleuven.be/static/krr/files/experiments/idp_relevance_experiments.tar.gz} \normalsize.

The solver maintains a subgraph of the dependency graph in order to keep track of the set of relevant literals.
This subgraph, denoted as the {\em relevance graph} contains all literals that are relevant and all edges between them (in the dependency graph). 
As such, the task of the tracker is to determine whether a given literal is a member of this graph or not.
We store this graph using a data structure, denoted {\tt candidate\_parents($l$)} that associates a literal with a set of literals called ``candidate parents''.
The ``candidate parents'' of \lit are the literals that are parents of \lit in the relevance graph. I.e., if \lit is irrelevant, this set is empty, otherwise, it consists of all relevant parents of \lit in the dependency graph. 
As such, it can be seen that there is an edge $(p,l)$ in the relevance graph iff $p\in~${\tt candidate\_parents($l$)}.
Thus, $l$ is relevant if and only if $l$ has a non-empty set of candidate parents.
We now describe an incremental algorithm to update the set of candidate parents for all literals if the state of the solver changes.
We prefer to keep these changes \emph{local}, i.e., to not reconstruct the entire relevance graph with each solver change. 


When the solver state changes, and the set of candidate parents must be updated, care must be taken to detect and remove cyclic dependencies.
These cyclic dependencies can arise when a candidate parent is removed from a literal $l$ and the remaining candidate parents of that literal are not reachable from \ptlit anymore but still have $l$ as a candidate parent, creating a loop.
A more detailed example is given in \exampleref{ex:cycles}

\begin{example}
\label{ex:cycles}
The following definition has the cyclic dependency of $p \leftarrow q \leftarrow p$.

\[\left\{\begin{array}{rl}
\ptlit & \leftarrow  a \lor p\\
p      & \leftarrow  q\\
q      & \leftarrow  p\\
\end{array}\right\}\]
Initially \pstruct = $\emptyset$, thus nothing is justified and all literals are relevant.
Thus, $p$ has candidate parents $\{\ptlit, q\}$, and $q$ has candidate parents $\{p\}$.
\begin{figure}
\centering
 \begin{tikzpicture}
  \node []        at (1,2) (start) {};
  \node [litnode] at (1,1)   (PT)    {\ptlit};
  \node [litnode] at (0,0)   (A)     {$a$};
  \node [litnode] at (2,0)   (P)     {$p$};
  \node [litnode] at (4,0)   (Q)     {$q$}; 
  
  \draw[->]  (start) to (PT);
  \draw[->]  (PT) to  (A);
  \draw[->]  (PT) to  (P);
  \draw[->]  (P) to[out=20,in=160]  (Q);
  \draw[->]  (Q) to[out=200,in=340]  (P);
 \end{tikzpicture}
 \caption{Relevance graph for $\pstruct = \{\}$}
\end{figure}

Consider the case where $a$ becomes true and hence $\ptlit$ becomes justified.

\begin{figure}
\centering
 \begin{tikzpicture}
  \node [litnode] at (1,1) (PT) {\ptlit};
  \node [litnode] at (0,0) (A) {$a$};
  \node [litnode] at (2,0) (P) {$p$};
  \node [litnode] at (4,0) (Q) {$q$};

  \draw[->, dashed]  (P) to[out=20,in=160]  (Q);
  \draw[->, dashed]  (Q) to[out=200,in=340]  (P);
 \end{tikzpicture}
 \caption{Relevance graph for $\pstruct = \{a^{\ltrue}\}$. Remaining loop indicated with dashed edges. }

\end{figure}

Simply removing $\ptlit$ from the set of candidate parents of $p$ means that $p$ still has candidate parents $\{q\}$, which is actually just a loop leading back to $p$.
Thus, a cycle detection algorithm is needed to force $p$ and $q$ to remain loop-free. 

\end{example}

Thus, adding and removing candidate parents is a complicated matter.
In \sectionref{sec:ufsdetection} we discuss how this cycle detection is done.
For now, we use the following interface for adjusting the set of a candidate parents.
\begin{description}
 \item [notifyAddCandidateParent($l$,$l'$)] add $l'$ to the candidate parents of $l$
 \item [notifyRemoveCandidateParent($l$,$l'$)] remove $l'$ from the candidate parents of $l$
\end{description}

Our definition of candidate parents potentially changes when the following changes take place (note that we already assumed the dependency relation to be non-changeable).
\begin{itemize}
 \item A change in the justification status of $l$
 \item A change in the relevance status of a parent literal $l'$
\end{itemize}


\noindent
Thus, we extend the interface of the relevance tracker to also support the following methods.
\begin{description}
 \item [notifyBecomesJustified($l$)] A literal $l$ goes from unjustified to justified
 \item [notifyBecomesUnjustified($l$)] A literal $l$ goes from justified to unjustified
 \item [notifyBecomesRelevant($l$)] A literal $l$ goes from irrelevant to relevant
 \item [notifyBecomesIrrelevant($l$)] A literal $l$ goes from relevant to irrelevant
\end{description}

\noindent
In the following subsections we present 
\begin{inparaenum}
\item an overview of the data structures in the relevance tracker and
\item the algorithms for the methods in our interface.
\end{inparaenum}



\subsection{Data Structures}
The data structures include sets and maps.
Unless specified otherwise, we use hash sets and hash maps.
The implementation uses {\tt std::unordered\_set} and {\tt std::unordered\_map} provided by the {\tt C++} standard library.

Internally, we store the dependency relation $\dep_\D$ using two maps in our module, named {\tt children} and {\tt parents}.
These data structures map a literal to a set of literals.
The first map ({\tt children}) maps a literal to its set of children in $\dep_\D$.
The second map ({\tt parents}) maps a literal to its set of parents in $\dep_\D$. 
They are initialised using the {\bf notifyNewRule} method.
Once they are initialised, they remain fixed.

We use a map ({\tt to\_just\_lit}) to transform a normal literal to its justification literal ($p \mapsto j(p)$, $\lnot p \mapsto \lnot j(p)$).
For efficiency reasons, we also maintain the inverse map {\tt to\_nonjust\_lit}~$=$~{\tt to\_just\_lit$^{-1}$}.
These maps are initialised when the justification definition \justdef is created and do not change during execution afterwards.

We maintain a set of atoms ({\tt is\_just\_atom}) to identify the justification atoms that were introduced.
This set are initialised when the justification definition \justdef is created and does not change during execution afterwards.

We use round brackets to indicate the result of a map lookup, e.g., \[\text{\tt to\_just\_lit($p$) = $j(p)$}.\]

We use round brackets to do a containment check of sets.
More precisely, \[{\tt is\_just\_atom(}p{\tt )} = \text{\tt true}\] if and only if $p$ is in the set {\tt is\_just\_atom}.
As mentioned before, the underlying solver is not allowed to make decisions on literals in this set.

We maintain a map {\tt candidate\_parents} with the invariant that it maps a literal $l$ to the set of candidate parents of $l$. 
This map is dynamic throughout execution and changes to this map are performed using the {\bf notifyAddCandidateParent} and {\bf notifyRemoveCandidateParent} methods.
 
\subsection{Notification-based Algorithms}
Given that the invariant of {\tt candidate\_parents} is satisfied in solver state $\solverstate=\langle \litsof{\voc}, \theory, \pstruct\rangle$,
we wish to perform the necessary changes such that they are satisfied in solver state $\solverstate' = \langle \litsof{\voc}, \theory, \pstruct[p:tv]\rangle$ with $p$ some atom and $tv$ one of $\{\ltrue, \lfalse, \lunkn\}$.

We initiate our notification-based algorithm as follows.
If $tv = \ltrue$, then we call {\bf notifyBecomesTrue($p$)}.
If $tv = \lfalse$, then we call {\bf notifyBecomesTrue($\lnot p$)}.
If $tv = \lunkn$, then we call {\bf notifyBecomesUnknown($p$)}.

This call, in turn, can trigger other internal notifications.
The implementation of these cascading notifications ensures that {\tt candidate\_parents} will comply with its invariant in interpretation $\solverstate'$ after the designated call to  {\bf notifyBecomesTrue} or {\bf notifyBecomesUnknown} is complete.

The relevance tracker implements {\bf isRelevant($l$)} by checking whether {\tt can\-didate\_parents($l$)} maps to an empty set or not.
This is a correct representation of the relevance status of $l$ if the invariant of {\tt candidate\_parents} is satisfied.

For methods {\bf notifyBecomesTrue($l$), notifyBecomesUnknown($l$)}: the given literal can be a normal literal ($p$ or $\lnot p$) or a justification literal ($j(p)$ or $\lnot j(p)$).
The relevance tracker takes no action for normal literals.
If the given literal is a justification literal, then we retrieve the original normal literal and notify the relevance tracker that this literal has become (un)justified.
Note that we re-use the notation of $\atomof{l}$ to indicate the atom of literal $l$.

\noindent
{\bf notifyBecomesTrue($l$)}: if {\tt is\_just\_atom}($\atomof{l}$), then call {\bf notifyBecomesJustified({\tt to\_nonjust\_lit($l$)})}.

\noindent
{\bf notifyBecomesUnknown($l$)}: if {\tt is\_just\_atom}($\atomof{l}$), then call {\bf notifyBecomesUnjustified({\tt to\_nonjust\_lit($l$)})}.

\noindent
{\bf notifyBecomesJustified($l$)}: call {\bf notifyRemoveAllCandidateParentsOf($l$)}.

\noindent
{\bf notifyBecomesUnjustified($l$)}: for all parents $p$ of $l$ that are relevant, call {\bf notifyAddCandidateParent($l$,$p$)}.

\noindent
{\bf notifyBecomesRelevant($l$)}: for all children $c$ of $l$, call {\bf notifyAddCandidate\-Parent($c$,$l$)}.

\noindent
{\bf notifyBecomesIrrelevant($l$)}: for all children $c$ of $l$, call {\bf notifyRemoveCan\-didateParent($c$,$l$)}.

\subsection{Maintaining watches instead of sets of candidates}
The above methods dictate how the {\tt candidate\_parents} map should be manipulated.
For efficiency reasons, the relevance tracker does not actively maintain this set of candidate parents.
Instead it keeps track of a single candidate parent as ``watched'' parent.
This watched parent is maintained using a map called {\tt watched\_parent($l$)} that maps a literal to a single parent of $l$.
The method {\bf isRelevant($l$)} now checks whether a given literal $l$ has a watched parent or not.

We only keep track of a single watched parent in order to minimize how many times a cycle detection algorithm has to be invoked.
The manipulation of the set of candidate parents, along with the invocation of a cycle detection algorithm is done as follows
\begin{description}
 \item [notifyAddCandidateParent($l$,$l'$)]
  Check for the following criteria
  \begin{itemize}
   \item $l$ does not have a watched parent yet
   \item $l$ is not justified
   \item $l'$ is relevant
   \item $l$ is a child of $l'$
  \end{itemize}
  If they are met, make {\tt watched\_parent($l$) = $l'$} and call {\bf notifyBecomes\-Relevant($l$)}.
  Note that a cyclic dependency check between $l$ and $l'$ is not needed, since $l$ could not have been a suitable watch for any other literal, as it was not relevant before.
 \item  [notifyRemoveCandidateParent($l$,$l'$)]
 If $l$ had $l'$ as its watch, remove $l'$ as watched parent of $l$.
 Try to find an alternative candidate parent $n$ such that the following hold.
  \begin{itemize}
   \item $n \neq l'$
   \item $l$ is a child of $n$
   \item $l$ is not justified
   \item $n$ is relevant
   \item Use a cycle detection algorithm to verify that setting {\tt watched\_parent($l$) = $n$} would not create a dependency cycle
  \end{itemize}
 If such $n$ can be found, set  {\tt watched\_parent($l$) = $n$}.
 If such $n$ cannot be found, call {\bf notifyBecomesIrrelevant($l$)}.
\end{description}
The implementation for the search for an alternative watch is a re-use of an existing ``unfounded set detection'' algorithm.
This algorithm is considered the fastest algorithm to achieve this task to date.

\subsection{Detecting Cycles}
\label{sec:ufsdetection}
For our implementation of the detection of cycles, we re-use parts of the existing {\em unfounded set propagation} algorithm~\cite{lpar/MarienWD07,ai/GebserKS12}.
This algorithm has a subcomponent that searches for cycles over negative literals.

\section{Conclusion and Future Work}
\label{sec:conclusion}

The notion of relevance is related to Magic Sets~\cite{jlp/BeeriR91,pods/BancilhonMSU86} in the field of Logic Programming in the sense that the resulting program of the magic set transformation on a program $P$ and query $Q$ does not execute parts of $P$ that do not contribute towards solving query $Q$.

One area where relevance should give great speedups is stable model counting. 
When \ptlit is justified, the number of solutions that this partial assignment represents is equal to $2^{n}$ with $n$ the number of unassigned open atoms in \D.
Stable model counters generally stop when the \emph{justified residual program} \cite{aaai/AzizCMS15} is empty.
Whenever this occurs, \ptlit is justified. 
However, this does not hold the other way round.
There are other cases where \ptlit is justified, but the justified residual program is non-empty. 
As such, exploiting relevance can ensure cutting out bigger parts of the search tree when model counting. 

In this paper we have presented our implementation of a relevance tracker module on top of an existing SAT(ID) solver, which consists of two methods; one to keep track of the set of justified literals and one to keep track of the set of relevant literals.
Each of them reuses existing techniques, namely the following.
\begin{itemize}
 \item Our method for keeping track of the justification status of literals. This method reuses existing propagation mechanisms. 
       We also prove the correctness of this approach if the underlying solver guarantees completeness with respect to rule application and unfounded set propagation.
 \item Our method for keeping track of the relevance status of literals reuses the existing unfounded set detection techniques~\cite{lpar/MarienWD07,ai/GebserKS12} for detection of cyclic dependencies between ``candidate parents''.
\end{itemize}

Generate-and-test \ASP programs are the most common form of \ASP programs, as can be witnessed, e.g., from the benchmarks in the latest \ASP competitions~\cite{journals/ai/CalimeriGMR16}.
Generate-and-test \ASP programs closely correspond to PC(ID) theories~\cite{DeneckerLTV12}.
This paper imposes minimal assumptions on the underlying solver, thus making it possible to translate these ideas to an \ASP context.
Since experiments have shown~\mycite{relevance} that exploiting relevance during solving can reduce the number of decisions, as well as the number of conflicts, it would be interesting to see how relevance might possibly affects existing ASP solvers. 


\section*{Acknowledgements}
This research was supported by the project GOA 13/010 Research Fund
KU Leuven and projects G.0489.10, G.0357.12 and G.0922.13 of FWO
(Research Foundation - Flanders). 
Bart Bogaerts is supported by the Finnish Center of Excellence in Computational
Inference Research (COIN) funded by the Academy of Finland (grant \#251170).

\small
\bibliographystyle{plainnat}
\bibliography{idp-latex/krrlib}

\end{document}
%
%
%
%
%
%
%